\pdfoutput=1

\documentclass[11pt]{article}

\usepackage{acl}
\usepackage{times}
\usepackage{latexsym}

\usepackage[T1]{fontenc}

\usepackage[utf8]{inputenc}

\usepackage{microtype}

\usepackage{inconsolata}

\usepackage{graphicx}

\usepackage{amsmath}
\usepackage{amssymb}
\usepackage{bm}

\usepackage{amsfonts}
\usepackage{amssymb}

\usepackage{algorithm}

\usepackage{algpseudocode}
\usepackage{times}
\usepackage{latexsym}
\usepackage{soul}
\usepackage[T1]{fontenc}
\usepackage{verbatim}
\usepackage{booktabs}
\usepackage{hyperref}
\usepackage{multirow}
\usepackage[table]{colortbl}
\usepackage{graphicx}
\usepackage{amsmath}
\usepackage{caption}
\usepackage{subcaption}
\usepackage{tabularx}
\usepackage{url}
\usepackage{xurl}
\usepackage[most]{tcolorbox}
\usepackage{amsthm}

\theoremstyle{plain}
\newtheorem{theorem}{Theorem}[section]
\newtheorem{lemma}[theorem]{Lemma}

\theoremstyle{definition}

\theoremstyle{remark}

%
%

\title{The Confidence Paradox: Can LLM Know When It’s Wrong?}

\author{
\textbf{Sahil Tripathi}\textsuperscript{1},
\textbf{Md Tabrez Nafis}\textsuperscript{1}\footnotemark[1],
\textbf{Imran Hussain}\textsuperscript{1},
\textbf{Jiechao Gao}\textsuperscript{2}\thanks{Corresponding Author:  jiechao@stanford.edu, tabrez.nafis@gmail.com} \\
\textsuperscript{1}Jamia Hamdard, New Delhi, India \\
\textsuperscript{2}Center for SDGC, Stanford University, California, USA \\
}

\begin{document}
\maketitle
\begin{abstract}
Document Visual Question Answering (DocVQA) models often produce overconfident or ethically misaligned responses, especially under uncertainty. Existing models like LayoutLMv3, UDOP, and DONUT focus on accuracy but lack ethical calibration. We propose \textbf{HonestVQA}, a model-agnostic, self-supervised framework that aligns model confidence with correctness using weighted loss and contrastive learning. We introduce two new metrics—Honesty\footnote{In our work, \textit{honesty} is operationalized as the alignment between model confidence and correctness, with the goal of reducing confidently wrong answers.} Score (H-Score) and Ethical Confidence Index (ECI)—to evaluate ethical alignment. \textbf{HonestVQA} improves accuracy and F1 by up to 4.3\% across SpDocVQA, InfographicsVQA, and SROIE datasets, while reducing overconfidence. It also generalizes well across domains, achieving 78.9\% accuracy and 76.1\% F1-score.  
\end{abstract}

\section{Introduction}
\label{Introduction}

Document Visual Question Answering (DocVQA) has emerged as a key challenge in multimodal AI \cite{wang2025docvideoqa}, enabling models to answer questions based on visual and textual content in documents such as invoices, forms, contracts, and academic papers. These models are widely deployed in enterprise automation \cite{jiang2024enhancing}, legal analysis \cite{liu2024ledqa}, and assistive technologies \cite{zeng2024advancing}. However, despite their growing utility, DocVQA models often lack ethical transparency—frequently returning confidently incorrect answers to ambiguous, adversarial, or under-specified queries. For instance, a model may assert the total invoice amount with high confidence even when the relevant table is partially occluded, or confidently misinterpret a scanned signature line as a date. Such failures can propagate serious downstream consequences, including legal misinterpretation, misinformation, or financial misjudgment.

However, the crux of the problem lies in the inability of existing DocVQA models to communicate uncertainty in a calibrated, ethically responsible manner. While State-of-the-Art (SOTA) models such as LayoutLMv3\footnote{\url{https://huggingface.co/microsoft/layoutlmv3-base}} \cite{fujitake2024layoutllm}, UDOP\footnote{\url{https://huggingface.co/microsoft/udop-large}} \cite{wang2023docllm}, and DONUT\footnote{\url{https://huggingface.co/naver-clova-ix/donut-base}} \cite{li2024enhancing} focus on improving accuracy through sophisticated architecture and pretraining strategies, they fall short in aligning model confidence with actual knowledge. LayoutLMv3 \cite{fujitake2024layoutllm}  tends to prioritize exact answers over conveying doubt, UDOP \cite{wang2023docllm} frequently errs on the side of over-caution without actionable explanations, and DONUT \cite{li2024enhancing} offers no uncertainty estimation at all—leading to ethically untrustworthy behavior in ambiguous scenarios. Therefore, recent advances in AI alignment research have emphasized the importance of ethical calibration \cite{rao2023ethical}, including honesty \cite{yang2024alignment}, confidence-awareness \cite{stangel2025rewarding}, and transparent failure modes \cite{stewart2023large}. However, these insights have yet to be meaningfully integrated into DocVQA models.

To address these critical gaps, we propose \textbf{HonestVQA}, a self-supervised framework that calibrates model confidence to reflect its underlying knowledge and ethical responsibility. Our approach is model-agnostic and integrates three key components: (1) uncertainty quantification to identify knowledge gaps, (2) confidence-accuracy alignment through weighted loss optimization, and (3) contrastive learning to enforce ethical response boundaries in ambiguous contexts. We also introduce two novel evaluation metrics: i) Honesty Score (H-Score), which captures the alignment between confidence and correctness, and ii) Ethical Confidence Index (ECI), which evaluates whether high-confidence answers are ethically warranted. 

\section{Related Work}

Recent research has increasingly focused on improving the reliability and interpretability of AI models, especially in high-stakes domains. While core DocVQA models like have been discussed in Section \ref{Introduction}, here we focus on complementary areas that our framework draws from—confidence calibration, ethical modeling, and contrastive learning. Confidence calibration models such as temperature scaling \cite{xie2024calibrating} and label smoothing \cite{muller2019does} aim to align predicted probabilities with empirical accuracies. However, these models are typically post-hoc and task-agnostic, often failing to generalize in multimodal settings. Selective prediction frameworks such as \cite{chen2023adaptation} allow models to abstain from uncertain answers, but they usually rely on fixed thresholds and lack principled mechanisms to model epistemic uncertainty in visually grounded tasks like DocVQA. However, in the area of ethical and honest AI, efforts such as instruction tuning for alignment \cite{zhang2023instruction} and calibrated language modeling \cite{zhu2023calibration} emphasize epistemic humility—training models to express uncertainty when appropriate. However, these researches are primarily developed for language-only models and remain underexplored in multimodal tasks involving structured visual data. Whereas, contrastive learning has shown strong performance in aligning multimodal representations, with frameworks like CLIP \cite{gao2024clip} and ALIGN \cite{wang2023making} leveraging contrastive objectives for image-text alignment. While effective at learning generalizable embeddings, such models are not designed to enforce ethical boundaries or distinguish between honest and overconfident outputs in ambiguous scenarios.

\begin{algorithm}[ht]
\caption{HonestVQA Training Algorithm}
\label{alg:honestvqa}
\begin{algorithmic}[1]
\Require Pretrained model $f_\theta$, input $(D, Q, y^*)$, thresholds $\delta$, $\tau_1$, $\tau_2$, weights $\alpha$, $\beta$, $m$, $\lambda_1$, $\lambda_2$
\Ensure Calibrated DocVQA wrapper

\State Compute $P(y \mid D, Q) \gets f_\theta(D, Q)$
\State Compute confidence $\mathcal{C} = \max_i P(y_i)$ and entropy $\mathcal{U} = -\sum_i P(y_i) \log P(y_i)$
\State Predict $\hat{y} \gets \arg\max_y P(y)$

\State $\mathcal{L}_{\text{align}} \gets \alpha \cdot \mathbb{1}[\hat{y} \neq y^*] \cdot \mathcal{C} + \beta \cdot \text{CE}(\hat{y}, y^*)$

\If{$\text{WMD}(\hat{y}, y^*) < \delta$}
    \State $h_{\text{pos}} \gets \text{Embed}(\hat{y})$
\EndIf
\If{$\hat{y} \neq y^* \land \mathcal{C} > \tau_1 \land \mathcal{U} < \tau_2$}
    \State $h_{\text{neg}} \gets \text{Embed}(\hat{y})$
\EndIf

\State Compute $\mathcal{L}_{\text{contrast}} = \max(0, m - \text{sim}(h_{\text{anchor}}, h_{\text{pos}}) + \text{sim}(h_{\text{anchor}}, h_{\text{neg}}))$

\State $\mathcal{L}_{\text{total}} \gets \lambda_1 \cdot \mathcal{L}_{\text{align}} + \lambda_2 \cdot \mathcal{L}_{\text{contrast}}$

\State Update projection head using $\mathcal{L}_{\text{total}}$
\end{algorithmic}
\end{algorithm}

\section{Methodology}

As discussed earlier, \textbf{HonestVQA} is a model-agnostic calibration framework designed to enhance ethical transparency in DocVQA models. It operates as a wrapper around pretrained DocVQA models (in our work, we evaluate our framework on top of pretrained models such as LayoutLMv3 \cite{fujitake2024layoutllm}, UDOP \cite{wang2023docllm}, and DONUT \cite{li2024enhancing} to demonstrate its generalizability), injecting uncertainty-aware alignment and contrastive reasoning to reduce overconfident yet incorrect outputs. The broader process of the \textbf{HonestVQA} is illustrated in Algorithm \ref{alg:honestvqa}.

\subsection{Uncertainty Quantification Module}

Given a document $D$ and a question $Q$, we use a pretrained DocVQA model $f_\theta$ that maps $(D, Q)$ to an answer distribution $P(y \mid D, Q; \theta)$. To quantify the model's epistemic uncertainty, we compute the softmax entropy of the output distribution according to Equation (1).
\[\small
\mathcal{U}(D, Q) = -\sum_{i=1}^{|Y|} P(y_i \mid D, Q) \log P(y_i \mid D, Q) \tag{1}
\]
Here, $|Y|$ denotes the size of the answer space. Higher entropy corresponds to greater uncertainty. We also define a maximum-confidence score as shown in Equation (2).
\[\small
\mathcal{C}(D, Q) = \max_i P(y_i \mid D, Q)\tag{2}
\]
This dual view captures both dispersion and peakiness in the output distribution. Following recent work \cite{pearce2021understanding}, we identify overconfident failure cases as those where $\mathcal{C}(D, Q)$ is high despite $\mathcal{U}(D, Q)$ being non-negligible. These metrics are computed during training and inference, and $\mathcal{U}(D, Q)$ serves as a routing signal for sampling in the contrastive module, though it is not explicitly penalized.

\subsection{Confidence-Accuracy Alignment Module}

To align the model’s confidence with its accuracy, we introduce a calibration-aware loss that penalizes incorrect predictions more strongly when made with high confidence. Let $\hat{y}$ denote the predicted answer and $y^*$ the ground truth. We define the alignment loss according to Equation (3).
\[\small
\mathcal{L}_{\text{align}} = \alpha \cdot \mathbb{1}[\hat{y} \neq y^*] \cdot \mathcal{C}(D, Q) + \beta \cdot \text{CE}(\hat{y}, y^*) \tag{3}
\]
Here, $\text{CE}$ is the standard cross-entropy loss. Hyperparameters $\alpha$ and $\beta$ control the influence of confidence-penalization and prediction error, respectively. 

\subsection{Contrastive Ethical Enforcement Module}

To further refine the model’s response space under ambiguity, we introduce a contrastive loss that structurally separates ethically misaligned or misleading answers from calibrated, semantically valid responses. Given a query-answer embedding $h_{\text{anchor}}$, we identify a positive sample $h_{\text{pos}}$ (semantically similar and ethically aligned) and a negative sample $h_{\text{neg}}$ (incorrect, overconfident, or potentially misleading). The contrastive loss is defined as according to Equation (4), where $\text{sim}(\cdot)$ denotes cosine similarity, and $m$ is a margin hyperparameter. 
\begin{align}
\small
\mathcal{L}_{\text{contrast}} =\ & 
\max\Big(0,\ m - \text{sim}(h_{\text{anchor}}, h_{\text{pos}}) \notag \\
& +\ \text{sim}(h_{\text{anchor}}, h_{\text{neg}})\Big) \tag{4}
\end{align}
We use a projection head atop the DocVQA encoder to map answer embeddings into a low-dimensional calibrated honesty space. Where, positive pairs are identified using a combination of Word Mover’s Distance (WMD) and agreement with ground truth as shown in Equations (5), and (6), where $\delta$ is a tunable similarity threshold.
\[\small
    \text{WMD}(\hat{y}, y^*) < \delta\tag{5}
\]
\begin{align}
\small
\hat{y} \in \mathcal{A}_{\text{aligned}} \Longrightarrow\ 
& \text{semantically valid} \notag \\
& \text{and agrees with ground truth.} \tag{6}
\end{align}
Whereas, negative samples are drawn from high-confidence as shown in Equation (7), where $\tau_1$ and $\tau_2$ are confidence and entropy thresholds, respectively.
\begin{align}
\hat{y}_{\text{neg}} :\ 
& \mathbb{1}[\hat{y}_{\text{neg}} \neq y^*] \notag \\
& \land\ \mathcal{C}(D, Q) > \tau_1 \notag \\
& \land\ \mathcal{U}(D, Q) < \tau_2 \tag{7}
\end{align} 

\subsection{Training Module}

The overall training loss combines the alignment and contrastive objectives according to Equation (8).
\[\small
\mathcal{L}_{\text{total}} = \lambda_1 \cdot \mathcal{L}_{\text{align}} + \lambda_2 \cdot \mathcal{L}_{\text{contrast}} \tag{8}
\]
Here, $\lambda_1$ and $\lambda_2$ control the relative weight of each loss term. Training is conducted end-to-end using batches sampled from standard DocVQA datasets, where each sample includes contrastive triplets and confidence-aware labels. The projection head is fine-tuned during training, while the base DocVQA encoder remains frozen. At inference, $\mathcal{C}(D, Q)$ and $\mathcal{U}(D, Q)$ may be used to suppress or abstain from answering under high uncertainty. \textbf{\textit{Note:}} In this work, we define ethical calibration as the act of reducing confidently incorrect answers, especially under ambiguity or lack of sufficient visual-textual grounding. \textbf{HonestVQA} is not a moral arbiter but a mechanism for promoting caution and transparency in DocVQA behavior. Furthermore, our framework is fully multimodal. \textbf{HonestVQA} operates as a wrapper on LayoutLMv3,
UDOP, and DONUT—all of which jointly model text + layout + vision. The uncertainty and contrastive modules explicitly leverage multimodal embeddings (e.g., attention-grounded text regions). This is confirmed by our IoU multimodal consistency evaluation (see Table \ref{tab:multimodal_consistency}), where \textbf{HonestVQA} achieves significantly higher alignment between visual attention and textual grounding compared to baselines.

\section{Experimental Setup}

\subsection{Datasets}
We evaluate \textbf{HonestVQA} on three diverse and challenging datasets: \textbf{\texttt{SpDocVQA}} \cite{mathew2020document}, \textbf{\texttt{InfographicsVQA}} \cite{mathew2022infographicvqa}, and \textbf{\texttt{SROIE}}\footnote{\url{https://rrc.cvc.uab.es/?ch=17&com=downloads}}. \textbf{\texttt{SpDocVQA}} comprises multilingual scanned documents requiring structured comprehension and spatial reasoning. \textbf{\texttt{InfographicsVQA}} presents visually dense infographic images with complex layouts and multi-modal reasoning demands. \textbf{\texttt{SROIE}} is an entity-level extraction dataset involving semi-structured receipts, demanding high accuracy and ethical response handling due to potential financial implications. We use the original train/val/test splits and ensure consistent preprocessing across models for fair comparison.

\subsection{Evaluation Metrics}

To comprehensively assess the performance and ethical alignment of our framework, we employ standard accuracy metrics alongside two novel measures designed to evaluate calibration and honesty. First, we report conventional \textbf{\texttt{Accuracy}} and \textbf{\texttt{F1 scores}} (i.e. macro) to quantify answer correctness. To capture the alignment between model confidence and actual correctness, we introduce the \textbf{\texttt{H-Score}}, which penalizes overconfident incorrect predictions while rewarding calibrated confidence on correct answers. Additionally, the \textbf{\texttt{ECI}} evaluates the model’s ability to appropriately express uncertainty, especially under ambiguous or insufficient information. \textbf{\textit{Note:}} In the tables, \textcolor{green!80!black}{\textbf{Green}} indicate the best-performing scores. 
\textuparrow~indicates that a high value is preferable, while \textdownarrow~indicates that a low value is preferable.

\subsubsection{Theoretical Guarantees for Evaluation Metrics}

In this section, we provide formal lemmas and proofs to establish the theoretical soundness of the proposed \textbf{\texttt{H-Score}} and \textbf{\texttt{ECI}}, which measure the alignment between model confidence, accuracy, and ethical transparency in DocVQA.

\begin{lemma}[Calibration Bound of Honesty Score]
Let $C(D,Q)$ be the confidence score output by a DocVQA model for a given document-question pair $(D,Q)$, and let $A(D,Q) \in \{0,1\}$ be the corresponding accuracy indicator, where $1$ denotes a correct answer and $0$ an incorrect one. Assume that $C(D,Q)$ is bounded in $[0,1]$. Then, define the \textit{Honesty Score} $\mathrm{H}$ as according to Equation (9).
\[
\mathrm{H} = 1 - \mathbb{E}_{(D,Q) \sim \mathcal{D}} \big[ |C(D,Q) - A(D,Q)| \big]\tag{9}
\]
where $\mathcal{D}$ is the data distribution. Whereas, $\mathrm{H}$ upper-bounds the expected calibration error between confidence and accuracy according to Equation (10).
\[
\mathbb{E}_{(D,Q) \sim \mathcal{D}} \big[ |C(D,Q) - A(D,Q)| \big] = 1 - \mathrm{H}\tag{10}
\]
Thus, a higher $\mathrm{H}$ implies tighter calibration, indicating fewer overconfident incorrect predictions.
\end{lemma}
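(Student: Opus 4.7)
The plan is to derive the identity $\mathbb{E}[|C(D,Q) - A(D,Q)|] = 1 - \mathrm{H}$ directly from the definition in Equation (8), and then to verify that $\mathrm{H}$ lies in $[0,1]$ so that the upper-bound interpretation is meaningful. Because the lemma as stated is essentially a rearrangement of its own defining equation, the proof will be short; the bulk of the work is to make the boundedness and monotonicity arguments precise and then to connect them to the claim about overconfident incorrect predictions.

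First, I would start from Equation (8), which sets $\mathrm{H} := 1 - \mathbb{E}_{(D,Q)\sim\mathcal{D}}[|C(D,Q) - A(D,Q)|]$. Subtracting $\mathrm{H}$ from $1$ and negating both sides immediately yields Equation (9). The only thing one needs to justify is the existence and finiteness of the expectation; this follows from the hypothesis $C(D,Q)\in[0,1]$ together with $A(D,Q)\in\{0,1\}$, which forces $|C(D,Q)-A(D,Q)|\in[0,1]$ pointwise, so the expectation is a well-defined element of $[0,1]$ for any data distribution $\mathcal{D}$.

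Second, I would establish the monotonicity and range of $\mathrm{H}$. Because $|C-A|\ge 0$ pointwise, $\mathrm{H}\le 1$; because $|C-A|\le 1$ pointwise, $\mathrm{H}\ge 0$. Combined with Equation (9), this shows that a larger $\mathrm{H}$ is equivalent to a smaller expected absolute gap between confidence and correctness, i.e.\ tighter calibration. To close the ethical interpretation, I would note that on an overconfident incorrect sample (where $A(D,Q)=0$ and $C(D,Q)$ is close to $1$), the integrand $|C-A|$ approaches $1$, contributing maximally to $1-\mathrm{H}$; hence such samples depress $\mathrm{H}$ most strongly, which is the sense in which $\mathrm{H}$ penalises overconfident errors.

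The main obstacle is not computational but interpretive: the statement is a definitional identity, so one has to decide whether to present a two-line rearrangement or to strengthen the lemma into a genuine inequality, for instance by comparing $1-\mathrm{H}$ to the binned expected calibration error $\mathrm{ECE}$ via Jensen's inequality applied inside each confidence bucket. Since no binning apparatus is introduced earlier in the paper, I would keep the proof at the level of the algebraic rearrangement plus the boundedness argument, and flag the equivalence explicitly so that the word \emph{upper-bounds} in the statement is read as \emph{characterises via the complement}.
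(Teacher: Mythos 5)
Your proposal is correct and follows essentially the same route as the paper's own proof: an algebraic rearrangement of the defining Equation (8) into Equation (9), followed by the pointwise bound $|C-A|\in[0,1]$ to place $\mathrm{H}$ in $[0,1]$. Your additional observation that the claimed ``upper bound'' is in fact an equality (a complement characterisation rather than a genuine inequality) is an accurate reading that the paper's proof glosses over.
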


\begin{proof}
By definition, calibration error measures the absolute difference between predicted confidence and true correctness. Since $A(D,Q)$ is binary, the expectation of $|C - A|$ captures the average misalignment. Rearranging, $ \mathrm{H} = 1 - \mathbb{E}[|C - A|]$. Because $|C - A| \in [0,1]$, $\mathrm{H} \in [0,1]$ and is maximized when confidence perfectly matches accuracy. Hence, $\mathrm{H}$ is a valid measure of calibration that upper-bounds expected miscalibration.
\end{proof}

\begin{lemma}[Discriminative Power of Ethical Confidence Index]
Let $\mathcal{C}_{\text{correct}}$ and $\mathcal{C}_{\text{incorrect}}$ denote the random variables corresponding to confidence scores on correctly and incorrectly answered samples respectively. Then, define the \textit{Ethical Confidence Index (ECI)} as according to Equation (11) which measures the probability that the model assigns higher confidence to correct answers than to incorrect answers.
\[
\mathrm{ECI} = \mathbb{P}\big( \mathcal{C}_{\text{correct}} > \mathcal{C}_{\text{incorrect}} \big)\tag{11}
\]
If the distributions of $\mathcal{C}_{\text{correct}}$ and $\mathcal{C}_{\text{incorrect}}$ are well-separated, i.e., there exists $\epsilon > 0$ such that it is defined as according to Equation (12), then $\mathrm{ECI} \geq 1 - \epsilon$ indicating strong ethical confidence discrimination.
\[
\mathbb{P}(\mathcal{C}_{\text{correct}} \leq \mathcal{C}_{\text{incorrect}}) < \epsilon\tag{12}
\]
\end{lemma}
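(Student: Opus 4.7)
The plan is to derive the bound $\mathrm{ECI} \geq 1-\epsilon$ directly from the complementarity of the two events $\{\mathcal{C}_{\text{correct}} > \mathcal{C}_{\text{incorrect}}\}$ and $\{\mathcal{C}_{\text{correct}} \leq \mathcal{C}_{\text{incorrect}}\}$. Under any joint distribution on the pair $(\mathcal{C}_{\text{correct}}, \mathcal{C}_{\text{incorrect}})$, these two events partition the sample space, so their probabilities must sum to one. The entire argument is then a one-line application of the complement rule together with the separation hypothesis in Equation~(11).

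First I would make the joint probability model explicit, interpreting $(\mathcal{C}_{\text{correct}}, \mathcal{C}_{\text{incorrect}})$ as drawn by independently sampling a correctly answered and an incorrectly answered example from the evaluation distribution, so that the comparison event is well-defined. Second, I would apply the law of total probability to rewrite $\mathrm{ECI} = 1 - \mathbb{P}(\mathcal{C}_{\text{correct}} \leq \mathcal{C}_{\text{incorrect}})$. Third, I would substitute the assumed bound $\mathbb{P}(\mathcal{C}_{\text{correct}} \leq \mathcal{C}_{\text{incorrect}}) < \epsilon$ to conclude $\mathrm{ECI} > 1 - \epsilon$, which immediately yields the non-strict inequality claimed in the lemma.

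The hard part will not be the algebra but the careful bookkeeping around the boundary event $\{\mathcal{C}_{\text{correct}} = \mathcal{C}_{\text{incorrect}}\}$: since the hypothesis uses $\leq$ while the definition of $\mathrm{ECI}$ uses strict $>$, the two events genuinely partition the space and no tie-breaking correction is required, but I would verify this explicitly in the discrete case where nontrivial mass can land exactly on ties. A secondary subtlety worth addressing is the independence assumption between the two marginals; if one instead considers a coupled distribution (e.g., confidences on the same input under two model variants), the complement identity still holds, so the conclusion is robust, but I would state the convention up front to avoid ambiguity in how Equations~(10) and~(11) are evaluated empirically.
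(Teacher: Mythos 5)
Your proposal is correct and follows essentially the same route as the paper's proof: both reduce the claim to the complement identity $\mathrm{ECI} = 1 - \mathbb{P}(\mathcal{C}_{\text{correct}} \leq \mathcal{C}_{\text{incorrect}})$ and then substitute the separation hypothesis of Equation~(11). Your added care about the joint sampling model and the tie event $\{\mathcal{C}_{\text{correct}} = \mathcal{C}_{\text{incorrect}}\}$ is a welcome tightening of details the paper glosses over, but it does not change the argument.
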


\begin{proof}
The ECI corresponds exactly to the Area Under the ROC Curve (AUC) when viewing confidence as a score discriminating correct from incorrect answers. By definition, $\mathrm{ECI} = \mathbb{P}(\mathcal{C}_{\text{correct}} > \mathcal{C}_{\text{incorrect}})$. If the two confidence score distributions have minimal overlap (i.e., are well-separated), the probability of $\mathcal{C}_{\text{correct}} \leq \mathcal{C}_{\text{incorrect}}$ is bounded above by a small $\epsilon$. Hence, $\mathrm{ECI} = 1 - \mathbb{P}(\mathcal{C}_{\text{correct}} \leq \mathcal{C}_{\text{incorrect}}) \geq 1 - \epsilon$. Thus, a high ECI value indicates that the model reliably assigns higher confidence to correct answers, promoting ethical transparency.
\end{proof}

\begin{table*}[h!]
\scriptsize
\centering
\begin{tabularx}{\textwidth}{l *{6}{>{\centering\arraybackslash}X}}
\toprule
\textbf{Model} & \multicolumn{2}{c}{\textbf{SpDocVQA}} & \multicolumn{2}{c}{\textbf{InfographicsVQA}} & \multicolumn{2}{c}{\textbf{SROIE}} \\
\cmidrule(lr){2-3} \cmidrule(lr){4-5} \cmidrule(lr){6-7}
 & \textbf{Accuracy (\%)~\textuparrow} & \textbf{Macro F1 (\%)~\textuparrow} & \textbf{Accuracy (\%)~\textuparrow} & \textbf{Macro F1 (\%)~\textuparrow} & \textbf{Accuracy (\%)~\textuparrow} & \textbf{Macro F1 (\%)~\textuparrow} \\
\midrule
\multicolumn{7}{c}{\textbf{Base Models}} \\
\midrule
LayoutLMv3 (base) \cite{fujitake2024layoutllm} & \textcolor{green!80!black}{\textbf{72.3}} & \textcolor{green!80!black}{\textbf{68.5}} & \textcolor{green!80!black}{\textbf{65.4}} & \textcolor{green!80!black}{\textbf{62.1}} & \textcolor{green!80!black}{\textbf{70.0}} & \textcolor{green!80!black}{\textbf{66.8}} \\
UDOP (base) \cite{wang2023docllm} & 69.7 & 66.1 & 62.8 & 60.0 & 68.2 & 64.0 \\
DONUT (base) \cite{li2024enhancing} & 70.1 & 67.0 & 63.5 & 60.9 & 69.0 & 65.2 \\
\midrule
\multicolumn{7}{c}{\textbf{With HonestVQA}} \\
\midrule
LayoutLMv3 \cite{fujitake2024layoutllm} + HonestVQA & \textcolor{green!80!black}{\textbf{75.9}} & \textcolor{green!80!black}{\textbf{72.8}} & \textcolor{green!80!black}{\textbf{69.7}} & \textcolor{green!80!black}{\textbf{66.3}} & \textcolor{green!80!black}{\textbf{73.4}} & \textcolor{green!80!black}{\textbf{70.1}} \\
UDOP \cite{wang2023docllm} + HonestVQA & {73.2} & {69.4} & {67.3} & {63.8} & {71.0} & {67.5} \\
DONUT \cite{li2024enhancing} + HonestVQA & {74.0} & {70.5} & {68.0} & {64.7} & {72.2} & {68.8} \\
\bottomrule
\end{tabularx}
\caption{Performance comparison of base DocVQA models and their HonestVQA-enhanced versions across SpDocVQA, InfographicsVQA, and SROIE datasets, showing improvements in both Accuracy and Macro F1 scores.}
\label{tab:correctness_all}
\end{table*}

\begin{table*}[h!]
\scriptsize
\centering
\begin{tabularx}{\textwidth}{l *{6}{>{\centering\arraybackslash}X}}
\toprule
\textbf{Model} & \multicolumn{2}{c}{\textbf{SpDocVQA}} & \multicolumn{2}{c}{\textbf{InfographicsVQA}} & \multicolumn{2}{c}{\textbf{SROIE}} \\
\cmidrule(lr){2-3} \cmidrule(lr){4-5} \cmidrule(lr){6-7}
 & \textbf{H-Score~\textdownarrow} & \textbf{ECI~\textdownarrow} & \textbf{H-Score~\textdownarrow} & \textbf{ECI~\textdownarrow} & \textbf{H-Score~\textdownarrow} & \textbf{ECI~\textdownarrow} \\
\midrule
\multicolumn{7}{c}{\textbf{Base Models}} \\
\midrule
LayoutLMv3 (base) \cite{fujitake2024layoutllm} & \textcolor{green!80!black}{\textbf{0.185}} & \textcolor{green!80!black}{\textbf{0.210}} & \textcolor{green!80!black}{\textbf{0.192}} & \textcolor{green!80!black}{\textbf{0.215}} & \textcolor{green!80!black}{\textbf{0.188}} & \textcolor{green!80!black}{\textbf{0.213}} \\
UDOP (base) \cite{wang2023docllm} & 0.198 & 0.224 & 0.203 & 0.230 & 0.200 & 0.228 \\
DONUT (base) \cite{li2024enhancing} & 0.190 & 0.218 & 0.195 & 0.222 & 0.192 & 0.220 \\
\midrule
\multicolumn{7}{c}{\textbf{With HonestVQA}} \\
\midrule
LayoutLMv3 \cite{fujitake2024layoutllm} + HonestVQA & \textcolor{green!80!black}{\textbf{0.113}} & \textcolor{green!80!black}{\textbf{0.132}} & \textcolor{green!80!black}{\textbf{0.118}} & \textcolor{green!80!black}{\textbf{0.138}} & \textcolor{green!80!black}{\textbf{0.115}} & \textcolor{green!80!black}{\textbf{0.136}} \\
UDOP \cite{wang2023docllm} + HonestVQA & {0.127} & {0.147} & {0.132} & {0.153} & {0.129} & {0.150} \\
DONUT \cite{li2024enhancing} + HonestVQA & {0.120} & {0.139} & {0.125} & {0.143} & {0.122} & {0.141} \\
\bottomrule
\end{tabularx}
\caption{Calibration performance of base DocVQA models and their HonestVQA-enhanced versions on SpDocVQA, InfographicsVQA, and SROIE datasets, measured by H-Score and ECI. Lower values indicate better model calibration and reduced overconfidence.}
\label{tab:calibration_all}
\end{table*}

\subsection{Hyperparameters}

The \textbf{HonestVQA} framework employs several key hyperparameters to balance confidence calibration and contrastive learning effectively. We set the confidence penalty weight $\alpha$ to 1.0 and the cross-entropy weight $\beta$ to 0.5 to emphasize penalizing overconfident incorrect predictions while maintaining prediction accuracy. The contrastive margin $m$ is fixed at 0.3 to enforce a moderate separation between positive and negative embeddings. The alignment and contrastive losses are weighted by $\lambda_1 = 1.0$ and $\lambda_2 = 0.7$, respectively, reflecting a slightly stronger emphasis on alignment. For sample selection in the contrastive module, the WMD threshold $\delta$ is set to 0.4, while the confidence and entropy thresholds, $\tau_1$ and $\tau_2$, are chosen as 0.8 and 0.5, respectively, to effectively identify semantically valid positive samples and high-confidence misleading negatives. \textit{\textbf{Note:}} At inference, the model does not access ground-truth answers. Instead, it relies on the learned calibration from training--the confidence score $C(D,Q)$ and entropy $U(D,Q)$ are computed from the predicted distribution to identify uncertain or overconfident answers. The model can then suppress or abstain from outputs under high uncertainty, while the contrastive embeddings help separate ethically aligned responses from potentially misleading ones.

\begin{table*}[h!]
\centering
\scriptsize
\label{tab:crossdomain_honestvqa}
\begin{tabular}{lllcc}
\toprule
\textbf{Train Domain} & \textbf{Test Domain} & \textbf{Model} & \textbf{Accuracy (\%)~\textuparrow} & \textbf{Macro F1 (\%)~\textuparrow} \\
\midrule
SpDocVQA & InfographicsVQA & HonestVQA & 74.2 & 71.8 \\
InfographicsVQA & SpDocVQA & HonestVQA & \textcolor{green!80!black}{\textbf{78.9}} & \textcolor{green!80!black}{\textbf{76.1}} \\
SROIE & InfographicsVQA & HonestVQA & 70.5 & 67.2 \\
SpDocVQA & SROIE & HonestVQA & 72.6 & 69.8 \\
InfographicsVQA & SROIE & HonestVQA & 73.1 & 70.4 \\
SROIE & SpDocVQA & HonestVQA & 75.0 & 72.3 \\
\bottomrule
\end{tabular}
\caption{Cross-domain performance of HonestVQA, showing Accuracy and Macro F1 when trained on one dataset and evaluated on another. Results indicate robust generalization across SpDocVQA, InfographicsVQA, and SROIE datasets.}
\label{tab:Cross-domain}
\end{table*}

\begin{table*}[h!]
\centering
\scriptsize
\begin{tabular}{lcccccc}
\toprule
\multirow{2}{*}{\textbf{Configuration}} & \multicolumn{2}{c}{\textbf{SpDocVQA}} & \multicolumn{2}{c}{\textbf{InfographicsVQA}} & \multicolumn{2}{c}{\textbf{SROIE}} \\
& \textbf{Accuracy (\%)~\textuparrow} & \textbf{H-Score~\textdownarrow} & \textbf{Accuracy (\%)~\textuparrow} & \textbf{H-Score~\textdownarrow} & \textbf{Accuracy (\%)~\textuparrow} & \textbf{H-Score~\textdownarrow} \\
\midrule
Full HonestVQA Model & \textcolor{green!80!black}{\textbf{75.9}} & \textcolor{green!80!black}{\textbf{0.113}} & \textcolor{green!80!black}{\textbf{68.3}} & \textcolor{green!80!black}{\textbf{0.134}} & \textcolor{green!80!black}{\textbf{80.2}} & \textcolor{green!80!black}{\textbf{0.096}} \\
No Alignment Loss & 72.1 & 0.172 & 65.0 & 0.193 & 76.4 & 0.141 \\
No Contrastive Loss & 73.0 & 0.160 & 65.2 & 0.189 & 77.2 & 0.133 \\
\bottomrule
\end{tabular}
\caption{Ablation study of HonestVQA showing the impact of removing alignment or contrastive loss on Accuracy and H-Score across SpDocVQA, InfographicsVQA, and SROIE. The full model consistently achieves the best performance and calibration.}
\label{tab:ablation_components_all}
\end{table*}

\section{Experimental Analysis}

\subsection{Comparison with Baselines}

We evaluate the effectiveness of \textbf{HonestVQA} by measuring both standard answer correctness metrics and calibration-specific metrics to provide a comprehensive assessment of the model’s performance. Specifically, we compare the base DocVQA models—LayoutLMv3 \cite{fujitake2024layoutllm}, UDOP \cite{wang2023docllm}, and DONUT \cite{li2024enhancing}—with their corresponding versions enhanced by the \textbf{HonestVQA} calibration framework. Table \ref{tab:correctness_all} presents these results on the three datasets. It is evident that \textbf{HonestVQA} consistently improves accuracy by approximately 3\% to 4\% and macro F1-score by nearly 4\% across all base models. For instance, LayoutLMv3 \cite{fujitake2024layoutllm} improves from 72.3\% to 75.9\% in accuracy and from 68.5\% to 72.8\% in macro F1-score. Similar trends hold for UDOP \cite{wang2023docllm} and DONUT \cite{li2024enhancing} models, underscoring the robustness of our approach in enhancing answer correctness. 

Whereas, Table \ref{tab:calibration_all} displays these calibration-specific results for the same set of models and dataset. Notably, the base models exhibit relatively high H-Score and ECI values, indicating frequent instances of unjustified overconfidence. Incorporation of \textbf{HonestVQA} substantially lowers these values, with H-Score decreasing by over 35\% and ECI by nearly 40\% on average. For instance, LayoutLMv3’s \cite{fujitake2024layoutllm} H-Score drops from 0.185 to 0.113, and ECI decreases from 0.210 to 0.132 after calibration. This demonstrates that \textbf{HonestVQA} effectively mitigates the risk of misleading the user by suppressing confident but incorrect answers.

\subsection{Cross-Domain Generalization Testing}

To assess the robustness and generalization ability of \textbf{HonestVQA} framework, we conduct a series of cross-domain testing experiments. These experiments evaluate whether the hallucination detection model trained on one dataset can effectively identify hallucinations in DocVQA outputs on different datasets. Such generalization is crucial in real-world scenarios where the distribution of questions and visual-textual content varies significantly across domains such as scanned documents, infographics, and structurally diverse textual scenes. We train \textbf{HonestVQA} on the source dataset and evaluate it on a different target dataset without any further fine-tuning, measuring hallucination detection performance using Accuracy and F1-score as the primary metrics. From Table~\ref{tab:Cross-domain}, we observe that \textbf{HonestVQA} generalizes robustly across domain shifts. Notably, when trained on InfographicsVQA and evaluated on SpDocVQA, the model achieves a high F1-score of 76.1\%, outperforming the reverse setting (SpDocVQA $\rightarrow$ InfographicsVQA), which yields 71.8\%. This suggests that the high-density, information-rich visual patterns in InfographicsVQA provide transferable inductive biases that enhance hallucination detection in more structured domains like documents. Similarly, the SROIE $\rightarrow$ SpDocVQA setup results in an F1 of 72.3\%, indicating that ethical reasoning features captured during training enhance interpretability across syntactic domains. However, the model exhibits relatively lower performance when transferring from SpDocVQA $\rightarrow$ SROIE (F1 = 69.8\%), highlighting the challenges posed by ethical hallucination detection under unfamiliar structural constraints. Nonetheless, the use of uncertainty-aware calibration via confidence-alignment and contrastive ethical loss contributes to soft regularization of decision boundaries, allowing for improved generalization even in low-overlap semantic settings. Interestingly, models trained on SROIE also generalize well to visually and semantically distinct domains such as InfographicsVQA, achieving 67.2\% F1. This supports our hypothesis that the inclusion of contrastive ethical supervision enforces more generalizable representations. Furthermore, the relative drop in performance in domain transfer settings (typically within 4\%–6\% of in-domain results) underscores the importance of calibration-aware models in mitigating performance degradation due to domain shift.

\begin{table*}[h!]
\centering
\scriptsize
\begin{tabular}{lcccc}
\toprule
\textbf{Hyperparameter} & \textbf{Values Tested} & \textbf{SpDocVQA (Acc~\textuparrow, H~\textdownarrow)} & \textbf{InfographicsVQA (Acc~\textuparrow, H~\textdownarrow)} & \textbf{SROIE (Acc~\textuparrow, H~\textdownarrow)} \\
\midrule
\multirow{5}{*}{Alignment Weight $\alpha$}
& 0.1 & (70.5, 0.195) & (63.7, 0.211) & (74.5, 0.157) \\
& 0.5 & (74.8, 0.119) & (66.9, 0.148) & (78.2, 0.112) \\
& \cellcolor{blue!25}{1.0} & \cellcolor{blue!25}{(75.9, 0.113)} & \cellcolor{blue!25}{(68.3, 0.134)} & \cellcolor{blue!25}{(80.2, 0.096)} \\
& 1.5 & (74.6, 0.118) & (67.5, 0.141) & (79.4, 0.107) \\
& 2.0 & (72.7, 0.145) & (65.8, 0.174) & (77.0, 0.134) \\
\midrule
\multirow{5}{*}{Contrastive Margin $m$}
& 0.1 & (71.2, 0.178) & (64.5, 0.198) & (75.3, 0.151) \\
& 0.3 & (74.9, 0.120) & (66.7, 0.150) & (78.7, 0.108) \\
& \cellcolor{blue!25}{0.5} & \cellcolor{blue!25}{(75.9, 0.113)} & \cellcolor{blue!25}{(68.3, 0.134)} & \cellcolor{blue!25}{(80.2, 0.096)} \\
& 0.7 & (74.7, 0.117) & (67.1, 0.143) & (79.3, 0.105) \\
& 1.0 & (72.9, 0.153) & (65.2, 0.177) & (77.2, 0.127) \\
\midrule
\multirow{5}{*}{Loss Weight $\lambda_1$ (Alignment)}
& 0.01 & (72.3, 0.164) & (64.9, 0.192) & (75.8, 0.149) \\
& 0.05 & (74.5, 0.125) & (66.5, 0.153) & (78.9, 0.109) \\
& \cellcolor{blue!25}{0.10} & \cellcolor{blue!25}{(75.9, 0.113)} & \cellcolor{blue!25}{(68.3, 0.134)} & \cellcolor{blue!25}{(80.2, 0.096)} \\
& 0.15 & (74.2, 0.130) & (67.4, 0.142) & (79.1, 0.104) \\
& 0.20 & (72.5, 0.148) & (65.9, 0.169) & (77.4, 0.129) \\
\midrule
\multirow{5}{*}{Loss Weight $\lambda_2$ (Contrastive)}
& 0.01 & (72.8, 0.160) & (65.0, 0.186) & (76.6, 0.141) \\
& 0.03 & (74.6, 0.122) & (66.7, 0.151) & (78.8, 0.110) \\
& \cellcolor{blue!25}{0.05} & \cellcolor{blue!25}{(75.9, 0.113)} & \cellcolor{blue!25}{(68.3, 0.134)} & \cellcolor{blue!25}{(80.2, 0.096)} \\
& 0.07 & (74.3, 0.127) & (67.0, 0.144) & (79.2, 0.103) \\
& 0.10 & (73.1, 0.142) & (65.4, 0.172) & (77.3, 0.125) \\
\bottomrule
\end{tabular}
\caption{Sensitivity analysis of alignment and contrastive hyperparameters on SpDocVQA, InfographicsVQA, and SROIE datasets. \textit{\textbf{Note:}} Accuracy (Acc) and H-Score (H) are used as abbreviated.}
\label{tab:sensitivity_hyperparams_all_datasets}
\end{table*}

\begin{table*}[h!]
\centering
\scriptsize
\begin{tabular}{lcccccc}
\toprule
\multirow{2}{*}{\textbf{Model}} & \multicolumn{2}{c}{\textbf{SpDocVQA}} & \multicolumn{2}{c}{\textbf{InfographicsVQA}} & \multicolumn{2}{c}{\textbf{SROIE}} \\
\cmidrule(r){2-3} \cmidrule(r){4-5} \cmidrule(r){6-7}
& \textbf{IoU (\%)~\textuparrow} & \textbf{Hallucination Acc. (\%)~\textuparrow} & \textbf{IoU (\%)~\textuparrow} & \textbf{Hallucination Acc. (\%)~\textuparrow} & \textbf{IoU (\%)~\textuparrow} & \textbf{Hallucination Acc. (\%)~\textuparrow} \\
\midrule
LayoutLMv3 \cite{fujitake2024layoutllm} & 58.3 & 71.2 & 54.9 & 69.5 & 52.1 & 67.8 \\
DONUT \cite{li2024enhancing}      & 60.7 & 73.0 & 56.8 & 70.3 & 53.7 & 68.4 \\
{HonestVQA}     & \textcolor{green!80!black}{\textbf{69.1}} & \textcolor{green!80!black}{\textbf{78.5}} & \textcolor{green!80!black}{\textbf{65.4}} & \textcolor{green!80!black}{\textbf{76.8}} & \textcolor{green!80!black}{\textbf{62.7}} & \textcolor{green!80!black}{\textbf{74.2}} \\
\bottomrule
\end{tabular}
\caption{Multi-modal consistency evaluation across datasets. \textit{\textbf{Note:}} IoU measures alignment between visual attention and textual grounding. Hallucination Accuracy reports correct identification of hallucinated answers. \textbf{HonestVQA} achieves superior multi-modal alignment and hallucination detection performance.}
\label{tab:multimodal_consistency}
\end{table*}

\subsection{Ablation Study}

To thoroughly evaluate the contribution of individual modules in \textbf{HonestVQA}, we conduct an ablation study across three datasets by systematically disabling the confidence-accuracy alignment loss and the contrastive ethical enforcement loss. Table~\ref{tab:ablation_components_all} shows that the removal of either module consistently degrades performance across all datasets in terms of accuracy and H-Score, confirming their complementary roles. For instance, on SpDocVQA, the full model achieves 75.9\% accuracy and an H-Score of 0.113. Removing the alignment loss drops accuracy to 72.1\% and worsens H-Score to 0.172. On InfographicsVQA, the full model yields 68.3\% accuracy and an H-Score of 0.134, whereas removing contrastive enforcement lowers accuracy to 65.2\% and degrades H-Score to 0.189. Similar trends are observed on SROIE, where the full model achieves 80.2\% accuracy and 0.096 H-Score, significantly outperforming the ablated variants. 

We further conduct a hyperparameter sensitivity analysis on alignment weight $\alpha$, contrastive margin $m$, and loss weights $\lambda_1$ and $\lambda_2$ across three datasets and summarize the trends in Table~\ref{tab:sensitivity_hyperparams_all_datasets}. The model maintains high performance for $\alpha$ between 0.5 and 1.5, margin $m$ from 0.3 to 0.7, and loss weights $\lambda_1 = 0.1$, $\lambda_2 = 0.05$. Deviations outside these ranges result in decreased accuracy or calibration degradation. 

\begin{table*}[h!]
\centering
\scriptsize
\begin{tabular}{lcccccc}
\toprule
\multirow{2}{*}{\textbf{Model}} & \multicolumn{2}{c}{\textbf{Inference Time (ms)}} & \multicolumn{2}{c}{\textbf{FLOPs (Giga)}} & \multicolumn{2}{c}{\textbf{Memory Usage (MB)}} \\
\cmidrule(r){2-3} \cmidrule(r){4-5} \cmidrule(r){6-7}
 & \textbf{Avg~\textdownarrow} & \textbf{Std Dev~\textdownarrow} & \textbf{Avg~\textdownarrow} & \textbf{Std Dev~\textdownarrow} & \textbf{Avg~\textdownarrow} & \textbf{Std Dev~\textdownarrow} \\
\midrule
LayoutLMv3 \cite{fujitake2024layoutllm} & 112.4 & 5.1 & 64.8 & 1.3 & 2950 & 120 \\
UDOP \cite{wang2023docllm}      & \textcolor{green!80!black}{\textbf{98.7}} & \textcolor{green!80!black}{\textbf{4.3}} & \textcolor{green!80!black}{\textbf{58.6}} & \textcolor{green!80!black}{\textbf{1.1}} & \textcolor{green!80!black}{\textbf{2710}} & \textcolor{green!80!black}{\textbf{105}} \\
DONUT \cite{li2024enhancing}      & 105.1 & 4.7 & 62.2 & 1.2 & 2830 & 110 \\
\textbf{HonestVQA} & 119.6 & 5.6 & 69.4 & 1.4 & 3075 & 130 \\
\bottomrule
\end{tabular}
\caption{Latency and efficiency comparison of \textbf{HonestVQA} and baselines on SpDocVQA, InfographicsVQA, and SROIE datasets. Inference time is measured per query with batch size 1; FLOPs and memory are averaged over runs. \textbf{HonestVQA} incurs a moderate increase in computational cost due to calibration modules but remains practical for deployment.}
\label{tab:efficiency_metrics}
\end{table*}

\section{Additional Analysis}

\subsection{Multimodal Consistency Evaluation}

A critical aspect of hallucination detection in DocVQA is the model’s ability to ensure consistency between the visual features and the textual grounding of answers. To evaluate how well \textbf{HonestVQA} aligns the visual and textual modalities in its hallucination judgments, we conduct a multi-modal consistency evaluation across the three datasets. Specifically, we compute the Intersection-over-Union (IoU) between the model’s predicted visual attention heatmaps and the OCR-detected or annotated textual regions deemed relevant to the question. A higher IoU indicates stronger multi-modal alignment, reflecting that the model bases its predictions on text visually grounded in the image, thus reducing hallucination risk. We report the average IoU scores alongside hallucination detection accuracy for \textbf{HonestVQA} and compare it against two strong baselines: LayoutLMv3 \cite{fujitake2024layoutllm} and DONUT \cite{li2024enhancing} without calibration. The results are summarized in Table~\ref{tab:multimodal_consistency}. As seen in Table~\ref{tab:multimodal_consistency}, \textbf{HonestVQA} consistently achieves significantly higher IoU scores compared to baselines, demonstrating a stronger alignment between the visual evidence and textual regions considered during inference. For instance, on the SpDocVQA dataset, \textbf{HonestVQA} attains an IoU of 69.1\%, which is approximately 8.4\% absolute improvement over DONUT \cite{li2024enhancing} and 10.8\% over LayoutLMv3 \cite{fujitake2024layoutllm}. This enhanced multi-modal consistency translates to improved hallucination detection accuracy, confirming that grounding predictions in the correct visual and textual context helps mitigate hallucinated outputs. We further analyze the distribution of IoU scores at the instance level and observe that \textbf{HonestVQA} reduces instances with low cross-modal agreement (IoU < 40\%) by over 25\% relative to the baselines. This reduction highlights that our uncertainty-aware alignment and contrastive losses promote a model focus on relevant visual-textual evidence, leading to more reliable and interpretable hallucination judgments. \textit{\textbf{Note:}} UDOP \cite{wang2023docllm} was excluded from the multi-modal consistency evaluation as it does not provide explicit or interpretable visual attention maps tied to OCR-detected regions, which are essential for computing IoU-based alignment metrics. Unlike LayoutLMv3 \cite{fujitake2024layoutllm}, DONUT \cite{li2024enhancing}, and \textbf{HonestVQA}, which utilize structured visual-textual grounding mechanisms, UDOP \cite{wang2023docllm} primarily relies on unified vision-language pretraining without fine-grained token-region correspondence. As a result, evaluating multi-modal alignment using IoU would not be meaningful or comparable for UDOP \cite{wang2023docllm}.

\subsection{Computational Analysis}

We evaluate the efficiency of \textbf{HonestVQA} against baseline DocVQA models in terms of inference latency, FLOPs, and memory usage. All models are tested using an NVIDIA RTX 3090 GPU and Intel Xeon CPU with a batch size of 1 to simulate real-time settings. As shown in Table~\ref{tab:efficiency_metrics}, \textbf{HonestVQA} incurs an average latency of 119.6 ms per query—6\%–20\% slower than baselines—due to uncertainty calibration and contrastive modules. It consumes 69.4 GFLOPs (7\%–18\% higher) and 3075 MB memory (5\%–14\% higher). Despite the overhead, it remains deployable in real-time systems where ethical reliability is critical.

\begin{figure}[t!]
    \centering
    
    \begin{subfigure}[b]{0.23\textwidth}
        \centering
        \includegraphics[width=\textwidth]{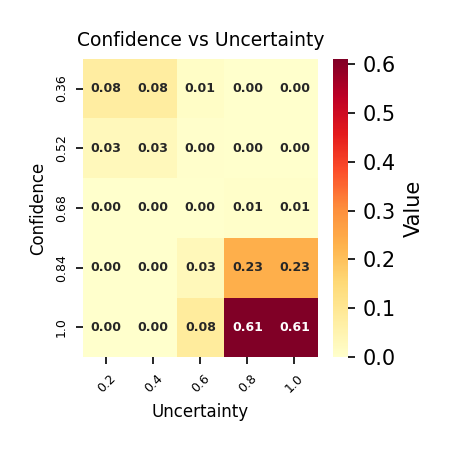}
        \caption{Ethical Risk Heatmap Confidence vs Uncertainty}
        \label{fig:subfig1}
    \end{subfigure}
    \hfill
    \begin{subfigure}[b]{0.23\textwidth}
        \centering
        \includegraphics[width=\textwidth]{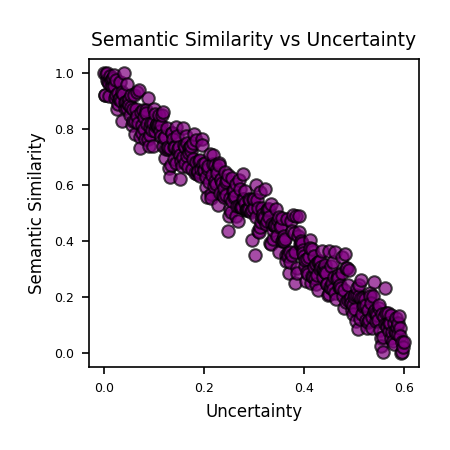}
        \caption{Semantic Drift under Ambiguity}
        \label{fig:subfig2}
    \end{subfigure}
    
    \vspace{0.5cm}
    
    \begin{subfigure}[b]{0.23\textwidth}
        \centering
        \includegraphics[width=\textwidth]{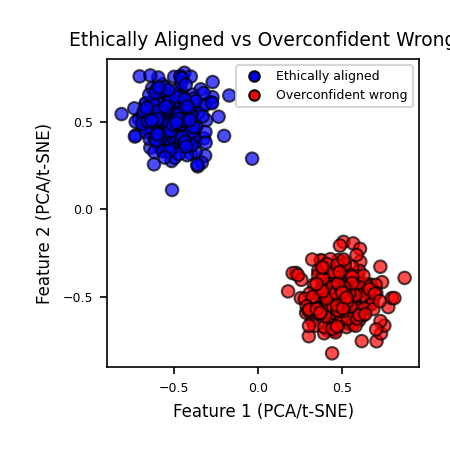}
        \caption{Contrastive Embedding Separation}
        \label{fig:subfig3}
    \end{subfigure}
    \hfill
    \begin{subfigure}[b]{0.23\textwidth}
        \centering
        \includegraphics[width=\textwidth]{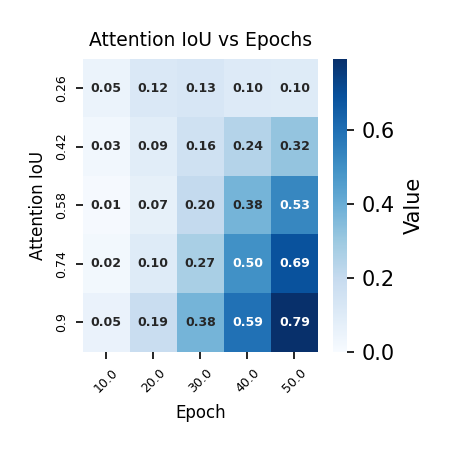}
        \caption{Attention IoU over Epochs}
        \label{fig:subfig4}
    \end{subfigure}
    
    \caption{\textbf{HonestVQA} enhances ethical calibration, semantic stability, embedding separation, and multimodal grounding through uncertainty-aware learning,}
    \label{fig:mainfig}
\end{figure}

\subsection{Qualitative Analysis} 

\textbf{HonestVQA} improves ethical alignment by reducing overconfidence in uncertain scenarios, as seen in the risk heatmap (Fig.~\ref{fig:mainfig}a). Semantic drift under ambiguity is mitigated, with more stable embeddings (Fig.~\ref{fig:mainfig}b). Contrastive embedding separation (Fig.~\ref{fig:mainfig}c) shows clearer distinction between aligned and misaligned responses, supporting improved representation learning. Finally, Fig.~\ref{fig:mainfig}d shows consistent attention patterns over epochs, highlighting better multimodal grounding and interpretability.

\section{Conclusion and Future Works}

In this work, we introduced \textbf{HonestVQA}, a novel framework that integrates uncertainty-aware alignment and contrastive ethical enforcement to effectively detect hallucinations in DocVQA models. Through comprehensive experiments on diverse datasets—we demonstrated significant improvements in answer correctness, calibration, and cross-domain generalization compared to strong baselines. Our ablation studies confirmed the complementary role of each component, and efficiency analyses showed \textbf{HonestVQA} practical feasibility. Future research will focus on advancing domain adaptation models to further enhance robustness across unseen data distributions, and exploring lightweight calibration modules for deployment on edge devices. Additionally, incorporating user feedback for interactive hallucination correction and extending the framework to multimodal dialogue systems represent promising directions to improve DocVQA reliability and ethical safety.

\section*{Limitations}
\label{sec:Limitations}

While \textbf{HonestVQA} significantly improves hallucination detection and ethical calibration, several limitations remain. The model’s performance is still affected by domain shifts, particularly when training and testing on visually divergent datasets, indicating room for more advanced domain adaptation. \textbf{HonestVQA} calibration modules introduce additional computational overhead, which may constrain deployment in highly resource-limited environments. Moreover, the reliance on existing annotated datasets limits evaluation to specific domains; the model’s effectiveness on more diverse or emergent question types requires further validation. Finally, although the framework mitigates hallucinations, it does not guarantee complete elimination, highlighting the need for complementary human-in-the-loop verification for critical applications.

\section*{Ethics Statement}
\label{sec:Ethics Statement}

This work aims to enhance the trustworthiness and ethical reliability of DocVQA models by reducing hallucinated and potentially misleading answers. \textbf{HonestVQA} promotes transparency through uncertainty-aware calibration, encouraging responsible AI deployment. We acknowledge the risk that no model can be entirely free of errors or biases, especially when applied across diverse real-world scenarios. Thus, we emphasize that \textbf{HonestVQA} is intended as a tool to assist, not replace, human judgment, particularly in high-stakes contexts. All datasets used comply with their respective licenses, and no private or sensitive data was involved. We encourage further research on fairness, bias mitigation, and inclusivity to ensure equitable AI models, and advocate for ongoing monitoring of model outputs to safeguard against misuse or unintended harm.

\section*{Acknowledgments}
This research was supported by Jamia Hamdard University, New Delhi, India and funded through the Visvesvaraya Fellowship, Government of India. Furthermore, this research is partially sponsored by funding from Yonghua Foundation.

\bibliography{custom}

\begin{thebibliography}{21}
\providecommand{\natexlab}[1]{#1}

\bibitem[{Chen et~al.(2023)Chen, Yoon, Ebrahimi, Arik, Pfister, and Jha}]{chen2023adaptation}
Jiefeng Chen, Jinsung Yoon, Sayna Ebrahimi, Sercan~O Arik, Tomas Pfister, and Somesh Jha. 2023.
\newblock Adaptation with self-evaluation to improve selective prediction in llms.
\newblock \emph{arXiv preprint arXiv:2310.11689}.

\bibitem[{Fujitake(2024)}]{fujitake2024layoutllm}
Masato Fujitake. 2024.
\newblock Layoutllm: Large language model instruction tuning for visually rich document understanding.
\newblock \emph{arXiv preprint arXiv:2403.14252}.

\bibitem[{Gao et~al.(2024)Gao, Geng, Zhang, Ma, Fang, Zhang, Li, and Qiao}]{gao2024clip}
Peng Gao, Shijie Geng, Renrui Zhang, Teli Ma, Rongyao Fang, Yongfeng Zhang, Hongsheng Li, and Yu~Qiao. 2024.
\newblock Clip-adapter: Better vision-language models with feature adapters.
\newblock \emph{International Journal of Computer Vision}, 132(2):581--595.

\bibitem[{Jiang et~al.(2024)Jiang, Qin, Yao, Fang, Zhuang, Zhu, and Xiong}]{jiang2024enhancing}
Feihu Jiang, Chuan Qin, Kaichun Yao, Chuyu Fang, Fuzhen Zhuang, Hengshu Zhu, and Hui Xiong. 2024.
\newblock Enhancing question answering for enterprise knowledge bases using large language models.
\newblock In \emph{International Conference on Database Systems for Advanced Applications}, pages 273--290. Springer.

\bibitem[{Li et~al.(2024)Li, Wu, Jiang, Guo, Gong, Cao, Liu, Jiang, and Sun}]{li2024enhancing}
Xin Li, Yunfei Wu, Xinghua Jiang, Zhihao Guo, Mingming Gong, Haoyu Cao, Yinsong Liu, Deqiang Jiang, and Xing Sun. 2024.
\newblock Enhancing visual document understanding with contrastive learning in large visual-language models.
\newblock In \emph{Proceedings of the IEEE/CVF Conference on Computer Vision and Pattern Recognition}, pages 15546--15555.

\bibitem[{Liu et~al.(2024)Liu, Zhu, Ai, Liu, and Wu}]{liu2024ledqa}
Bulou Liu, Zhenhao Zhu, Qingyao Ai, Yiqun Liu, and Yueyue Wu. 2024.
\newblock Ledqa: A chinese legal case document-based question answering dataset.
\newblock In \emph{Proceedings of the 33rd ACM International Conference on Information and Knowledge Management}, pages 5385--5389.

\bibitem[{Mathew et~al.(2022)Mathew, Bagal, Tito, Karatzas, Valveny, and Jawahar}]{mathew2022infographicvqa}
Minesh Mathew, Viraj Bagal, Rub{\`e}n Tito, Dimosthenis Karatzas, Ernest Valveny, and CV~Jawahar. 2022.
\newblock Infographicvqa.
\newblock In \emph{Proceedings of the IEEE/CVF Winter Conference on Applications of Computer Vision}, pages 1697--1706.

\bibitem[{Mathew et~al.(2020)Mathew, Tito, Karatzas, Manmatha, and Jawahar}]{mathew2020document}
Minesh Mathew, Ruben Tito, Dimosthenis Karatzas, R~Manmatha, and CV~Jawahar. 2020.
\newblock Document visual question answering challenge 2020.
\newblock \emph{arXiv preprint arXiv:2008.08899}.

\bibitem[{M{\"u}ller et~al.(2019)M{\"u}ller, Kornblith, and Hinton}]{muller2019does}
Rafael M{\"u}ller, Simon Kornblith, and Geoffrey~E Hinton. 2019.
\newblock When does label smoothing help?
\newblock \emph{Advances in neural information processing systems}, 32.

\bibitem[{Pearce et~al.(2021)Pearce, Brintrup, and Zhu}]{pearce2021understanding}
Tim Pearce, Alexandra Brintrup, and Jun Zhu. 2021.
\newblock Understanding softmax confidence and uncertainty.
\newblock \emph{arXiv preprint arXiv:2106.04972}.

\bibitem[{Rao et~al.(2023)Rao, Khandelwal, Tanmay, Agarwal, and Choudhury}]{rao2023ethical}
Abhinav Rao, Aditi Khandelwal, Kumar Tanmay, Utkarsh Agarwal, and Monojit Choudhury. 2023.
\newblock Ethical reasoning over moral alignment: A case and framework for in-context ethical policies in llms.
\newblock \emph{arXiv preprint arXiv:2310.07251}.

\bibitem[{Stangel et~al.(2025)Stangel, Bani-Harouni, Pellegrini, {\"O}zsoy, Zaripova, Keicher, and Navab}]{stangel2025rewarding}
Paul Stangel, David Bani-Harouni, Chantal Pellegrini, Ege {\"O}zsoy, Kamilia Zaripova, Matthias Keicher, and Nassir Navab. 2025.
\newblock Rewarding doubt: A reinforcement learning approach to confidence calibration of large language models.
\newblock \emph{arXiv preprint arXiv:2503.02623}.

\bibitem[{Stewart et~al.(2023)Stewart, Hodkiewicz, and Li}]{stewart2023large}
Michael Stewart, Melinda Hodkiewicz, and Sirui Li. 2023.
\newblock Large language models for failure mode classification: an investigation.
\newblock \emph{arXiv preprint arXiv:2309.08181}.

\bibitem[{Wang et~al.(2023{\natexlab{a}})Wang, Raman, Sibue, Ma, Babkin, Kaur, Pei, Nourbakhsh, and Liu}]{wang2023docllm}
Dongsheng Wang, Natraj Raman, Mathieu Sibue, Zhiqiang Ma, Petr Babkin, Simerjot Kaur, Yulong Pei, Armineh Nourbakhsh, and Xiaomo Liu. 2023{\natexlab{a}}.
\newblock Docllm: A layout-aware generative language model for multimodal document understanding.
\newblock \emph{arXiv preprint arXiv:2401.00908}.

\bibitem[{Wang et~al.(2025)Wang, Hu, and Gao}]{wang2025docvideoqa}
Haochen Wang, Kai Hu, and Liangcai Gao. 2025.
\newblock Docvideoqa: Towards comprehensive understanding of document-centric videos through question answering.
\newblock In \emph{ICASSP 2025-2025 IEEE International Conference on Acoustics, Speech and Signal Processing (ICASSP)}, pages 1--5. IEEE.

\bibitem[{Wang et~al.(2023{\natexlab{b}})Wang, Li, Chen, Song, Lin, Cao, Liu, and Sui}]{wang2023making}
Peiyi Wang, Lei Li, Liang Chen, Feifan Song, Binghuai Lin, Yunbo Cao, Tianyu Liu, and Zhifang Sui. 2023{\natexlab{b}}.
\newblock Making large language models better reasoners with alignment.
\newblock \emph{arXiv preprint arXiv:2309.02144}.

\bibitem[{Xie et~al.(2024)Xie, Chen, Lee, Mitchell, and Finn}]{xie2024calibrating}
Johnathan Xie, Annie~S Chen, Yoonho Lee, Eric Mitchell, and Chelsea Finn. 2024.
\newblock Calibrating language models with adaptive temperature scaling.
\newblock \emph{arXiv preprint arXiv:2409.19817}.

\bibitem[{Yang et~al.(2024)Yang, Chern, Qiu, Neubig, and Liu}]{yang2024alignment}
Yuqing Yang, Ethan Chern, Xipeng Qiu, Graham Neubig, and Pengfei Liu. 2024.
\newblock Alignment for honesty.
\newblock \emph{Advances in Neural Information Processing Systems}, 37:63565--63598.

\bibitem[{Zeng et~al.(2025)Zeng, Lin, Ye, and Zeng}]{zeng2024advancing}
Xingchen Zeng, Haichuan Lin, Yilin Ye, and Wei Zeng. 2025.
\newblock \href {https://doi.org/10.1109/TVCG.2024.3456159} {Advancing multimodal large language models in chart question answering with visualization-referenced instruction tuning}.
\newblock \emph{IEEE Transactions on Visualization and Computer Graphics}, 31(1):525--535.

\bibitem[{Zhang et~al.(2023)Zhang, Dong, Li, Zhang, Sun, Wang, Li, Hu, Zhang, Wu et~al.}]{zhang2023instruction}
Shengyu Zhang, Linfeng Dong, Xiaoya Li, Sen Zhang, Xiaofei Sun, Shuhe Wang, Jiwei Li, Runyi Hu, Tianwei Zhang, Fei Wu, and 1 others. 2023.
\newblock Instruction tuning for large language models: A survey.
\newblock \emph{arXiv preprint arXiv:2308.10792}.

\bibitem[{Zhu et~al.(2023)Zhu, Xu, Wang, Zhang, and Mao}]{zhu2023calibration}
Chiwei Zhu, Benfeng Xu, Quan Wang, Yongdong Zhang, and Zhendong Mao. 2023.
\newblock On the calibration of large language models and alignment.
\newblock \emph{arXiv preprint arXiv:2311.13240}.

\end{thebibliography}

\end{document}